\newlength{\commentindent}
\renewcommand{\algorithmiccomment}[1]{\unskip\hfill\makebox[\commentindent][r]{$\triangleright$~#1}\par}
\LetLtxMacro{\oldalgorithmic}{\algorithmic}
\renewcommand{\algorithmic}[1][0]{%
  \oldalgorithmic[#1]%
  \renewcommand{\ALC@com}[1]{%
    \ifnum\pdfstrcmp{##1}{default}=0\else\algorithmiccomment{##1}\fi}%
}
\newtheorem{theorem}{Theorem}
\newtheorem{lemma}{Lemma}
\newtheorem{definition}{Definition}
\renewcommand*\env@matrix[1][c]{\hskip -\arraycolsep
  \let\@ifnextchar\new@ifnextchar
  \array{*\c@MaxMatrixCols #1}}
\icmltitlerunning{Gradients without Backpropagation}
\DeclareMathOperator{\Var}{Var}
\renewcommand{\Re}{\mathbb{R}}
\begin{document}

\twocolumn[
\icmltitle{Gradients without Backpropagation}

% It is OKAY to include author information, even for blind
% submissions: the style file will automatically remove it for you
% unless you've provided the [accepted] option to the icml2022
% package.

% List of affiliations: The first argument should be a (short)
% identifier you will use later to specify author affiliations
% Academic affiliations should list Department, University, City, Region, Country
% Industry affiliations should list Company, City, Region, Country

% You can specify symbols, otherwise they are numbered in order.
% Ideally, you should not use this facility. Affiliations will be numbered
% in order of appearance and this is the preferred way.
\icmlsetsymbol{equal}{*}

\begin{icmlauthorlist}
\icmlauthor{Atılım Güneş Baydin}{cs}
\icmlauthor{Barak A. Pearlmutter}{nuim}
\icmlauthor{Don Syme}{ms}
\icmlauthor{Frank Wood}{ubc}
\icmlauthor{Philip Torr}{eng}
\end{icmlauthorlist}

\icmlaffiliation{eng}{Department of Engineering Science, University of Oxford}
\icmlaffiliation{cs}{Department of Computer Science, University of Oxford}
\icmlaffiliation{ms}{Microsoft}
\icmlaffiliation{ubc}{Computer Science Department, University of British Columbia}
\icmlaffiliation{nuim}{Department of Computer Science, National University of Ireland Maynooth}

\icmlcorrespondingauthor{Atılım Güneş Baydin}{gunes@robots.ox.ac.uk}

% You may provide any keywords that you
% find helpful for describing your paper; these are used to populate
% the "keywords" metadata in the PDF but will not be shown in the document
\icmlkeywords{gradient descent, backpropagation, forward mode, automatic differentiation}

\vskip 0.3in
]

% this must go after the closing bracket ] following \twocolumn[ ...

% This command actually creates the footnote in the first column
% listing the affiliations and the copyright notice.
% The command takes one argument, which is text to display at the start of the footnote.
% The \icmlEqualContribution command is standard text for equal contribution.
% Remove it (just {}) if you do not need this facility.

\printAffiliationsAndNotice{}  % leave blank if no need to mention equal contribution
% \printAffiliationsAndNotice{\icmlEqualContribution} % otherwise use the standard text.

\begin{abstract}
Using backpropagation to compute gradients of objective functions for optimization has remained a mainstay of machine learning. Backpropagation, or reverse-mode differentiation, is a special case within the general family of automatic differentiation algorithms that also includes the forward mode. We present a method to compute gradients based solely on the directional derivative that one can compute exactly and efficiently via the forward mode. We call this formulation the \textbf{forward gradient}, an unbiased estimate of the gradient that can be evaluated in a single forward run of the function, entirely eliminating the need for backpropagation in gradient descent. We demonstrate forward gradient descent in a range of problems, showing substantial savings in computation and enabling training up to twice as fast in some cases.
\end{abstract}

\section{Introduction}
\label{submission}

Backpropagation \citep{linnainmaa1970representation,rumelhart1985learning} and gradient-based optimization have been the core algorithms underlying many recent successes in machine learning (ML) \cite{Goodfellow-et-al-2016,deisenroth2020mathematics}. It is generally accepted that one of the factors contributing to the recent pace of advance in ML has been the ease with which differentiable ML code can be implemented via well engineered libraries such as PyTorch \citep{paszke2019pytorch} or TensorFlow \citep{abadi2016tensorflow} with automatic differentiation (AD) capabilities \cite{griewank2008evaluating,baydin2018ad}. These frameworks provide the computational infrastructure on which our field is built.

Until recently, all major software frameworks for ML have been built around the reverse mode of AD, a technique to evaluate derivatives of numeric code using a two-phase forward--backward algorithm, of which backpropagation is a special case conventionally applied to neural networks. This is mainly due to the central role of scalar-valued objectives in ML, whose gradient with respect to a very large number of inputs can be evaluated exactly and efficiently with a single evaluation of the reverse mode. 

Reverse mode is a member of a larger family of AD algorithms that also includes the forward mode \citep{wengert1964simple}, which has the favorable characteristic of requiring only a single forward evaluation of a function (i.e., not involving any backpropagation) at a significantly lower computational cost. Crucially, forward and reverse modes of AD evaluate different quantities. Given a function $\bm{f}: \Re^n \to \Re^m$, forward mode evaluates the Jacobian--vector product $\bm{J}_{\bm{f}}\bm{v}$, where $\bm{J}_{\bm{f}} \in \Re^{m\times n}$ and $\bm{v} \in \Re^n$; and revese mode evaluates the vector--Jacobian product $\bm{v}^{\intercal} \bm{J}_{\bm{f}}$, where $\bm{v} \in \Re^m$. For the case of $f: \Re^n \to \Re$ (e.g., an objective function in ML), forward mode gives us $\nabla f \cdot \bm{v} \in \Re$, the directional derivative; and reverse mode gives us the full gradient $\nabla f \in \Re^n$.\footnote{We represent $\nabla f$ as a column vector.}

From the perspective of AD applied to ML, a ``holy grail'' is whether the practical usefulness of gradient descent can be achieved using only the forward mode, eliminating the need for backpropagation. This could potentially change the computational complexity of typical ML training pipelines, reduce the time and energy costs of training, influence ML hardware design, and even have implications regarding the biological plausibility of backpropagation in the brain \cite{bengio2015towards,lillicrap2020backpropagation}. In this work we present results that demonstrate stable gradient descent over a range of ML architectures using only forward mode AD.

\textbf{Contributions}
\begin{itemize}[leftmargin=3ex]
    \item We define the ``forward gradient'', an estimator of the gradient that we prove to be unbiased, based on forward mode AD without backpropagation.
    \item We implement a forward AD system from scratch in PyTorch, entirely independent of the reverse AD implementation already present in this library.
    \item We use forward gradients in stochastic gradient descent (SGD) optimization of a range of architectures, and show that a typical modern ML training pipeline can be constructed with only forward AD and no backpropagation.
    \item We compare the runtime and loss performance characteristics of forward gradients and backpropagation, and demonstrate speedups of up to twice as fast compared with backpropagation in some cases.
\end{itemize}

\textbf{A note on naming:} When naming the technique, it is tempting to adopt names like ``forward propagation'' or ``forwardprop'' to contrast it with backpropagation. We do not use this name as it is commonly used to refer to the forward evaluation phase of backpropagation,  distinct from forward AD. We observe that the simple name ``forward gradient'' is currently not used in ML, and it also captures the aspect that we are presenting a drop-in replacement for the gradient.

\section{Background}
\label{sec:background}

In order to introduce our method, we start by briefly reviewing the two main modes of automatic differentiation.

\subsection{Forward Mode AD}
\begin{center}
\vspace{-2mm}
\begin{tikzpicture}[scale=1]
    \node[circle,anchor=east] (theta) at  (0, 0) {$\bm{\theta}$};
    \node[circle,anchor=east] (v) at      (0, -3ex) {$\bm{v}$};
    \node[circle,anchor=west] (ftheta) at (20ex, 0) {$\bm{f}(\bm{\theta})$};
    \node[circle,anchor=west] (Jv) at (20ex, -3ex) {$\bm{J}_{\bm{f}}(\bm{\theta})\,\bm{v}$};
    \draw[->] (1ex, -1.5ex) -- (19ex, -1.5ex);
    \node[align=center] at (10ex,0ex) {Forward};
\end{tikzpicture}
\vspace{-8mm}
\end{center}

Given a function $\bm{f}:\Re^n \to \Re^m$ and the values $\bm{\theta}\in\Re^n$, $\bm{v}\in\Re^n$, forward mode AD computes $\bm{f}(\bm{\theta})$ and the \emph{Jacobian--vector product}\footnote{Popularized recently as a \texttt{jvp} operation in tensor frameworks such as JAX \citep{jax2018github}.} $\bm{J}_{\bm{f}}(\bm{\theta})\,\bm{v}$, where $\bm{J}_{\bm{f}}(\bm{\theta}) \in \Re^{m\times n}$ is the Jacobian matrix of all partial derivatives of $\bm{f}$ evaluated at $\bm{\theta}$, and $\bm{v}$ is a vector of perturbations.\footnote{Also called ``tangents''.} For the case of $f: \Re^n \to \Re$ the Jacobian--vector product corresponds to a directional derivative $\nabla f(\bm{\theta}) \cdot \bm{v}$, which is the projection of the gradient $\nabla f$ at $\bm{\theta}$ onto the direction vector $\bm{v}$, representing the rate of change along that direction.

It is important to note that the forward mode evaluates the function $\bm{f}$ and its Jacobian--vector product $\bm{J}_{\bm{f}}\bm{v}$ \emph{simultaneously in a single forward run.} Also note that $\bm{J}_{\bm{f}}\bm{v}$ is obtained without having to compute the Jacobian $\bm{J}_{\bm{f}}$, a feature sometimes referred to as a matrix-free computation.\footnote{The full Jacobian $\bm{J}$ can be computed with forward AD using $n$ forward evaluations of $\bm{J}\bm{e}_i,\;i=1,\dots n$ using standard basis vectors $\bm{e}_i$ so that each forward run gives us a single column of $\bm{J}$.}

\subsection{Reverse Mode AD}

\begin{center}
% \vspace{-5mm}
\begin{tikzpicture}[scale=1]
    \node[circle,anchor=east] (theta) at  (0, 0) {$\bm{\theta}$};
    \node[circle,anchor=east] (vJ) at     (0, -3ex) {$\bm{v}^{\intercal}\bm{J}_{\bm{f}}(\bm{\theta})$};
    \node[circle,anchor=west] (ftheta) at (20ex, 0) {$\bm{f}(\bm{\theta})$};
    \node[circle,anchor=west] (v) at (20ex, -3ex) {$\bm{v}$};
    \draw[->] (1ex, -0.1ex) -- (19ex, -0.1ex);
    \node[align=center] at (10ex,1.5ex) {Forward};
    \draw[<-] (1ex, -2.9ex) -- (19ex, -2.9ex);
    \node[align=center] at (10ex,-4.5ex) {Backward};
\end{tikzpicture}
\vspace{-5mm}
\end{center}

Given a function $\bm{f}:\Re^n \to \Re^m$ and the values $\bm{\theta}\in\Re^n$, $\bm{v}\in\Re^m$, reverse mode AD computes $\bm{f}(\bm{\theta})$ and the \emph{vector--Jacobian product}\footnote{Popularized recently as a \texttt{vjp} operation in tensor frameworks such as JAX \citep{jax2018github}.} $\bm{v}^{\intercal}\bm{J}_{\bm{f}}(\bm{\theta})$, where $\bm{J}_{\bm{f}}\in\Re^{m\times n}$ is the Jacobian matrix of all partial derivatives of $\bm{f}$ evaluated at $\bm{\theta}$, and $\bm{v}\in\Re^{m}$ is a vector of adjoints. For the case of $f: \Re^{n}\to\Re$ and $v=1$, reverse mode computes the gradient, i.e., the partial derivatives of $f$ w.r.t. all $n$ inputs $\nabla f(\bm{\theta})=\left[\frac{\partial f}{\partial \theta_1}, \dots , \frac{\partial f}{\partial \theta_n}\right]^{\intercal}$.

Note that $\bm{v}^{\intercal}\bm{J}_{\bm{f}}$ is computed in a single forward--backward evaluation, without having to compute the Jacobian $\bm{J}_{\bm{f}}$.\footnote{The full Jacobian $\bm{J}$ can be computed with reverse AD using $m$ evaluations of $\bm{e}_i^{\intercal}\bm{J},\;i=1,\dots m$ using standard basis vectors $\bm{e}_i$ so that each run gives us a single row of $\bm{J}$.}

\subsection{Runtime Cost}
\label{sec:runtime_cost}

The runtime costs of both modes of AD are bounded by a constant multiple of the time it takes to run the function $\bm{f}$ we are differentiating \citep{griewank2008evaluating}. Reverse mode has a higher cost than forward mode, because it involves data-flow reversal and it needs to keep a record (a ``tape'', stack, or graph) of the results of operations encountered in the forward pass, because these are needed in the evaluation of derivatives in the backward pass that follows. The memory and computation cost characteristics ultimately depend on the features implemented by the AD system such as exploiting sparsity \citep{gebremedhin2005color} or checkpointing \citep{siskind2018divide}. 

The cost can be analyzed by assuming computational complexities of elementary operations such as fetches, stores, additions, multiplications, and nonlinear operations \citep{griewank2008evaluating}. Denoting the time it takes to evaluate the original function $\bm{f}$ as $\textrm{runtime}(\bm{f})$, we can express the time taken by the forward and reverse modes as $R_f\times\textrm{runtime}(\bm{f})$ and $R_b\times\textrm{runtime}(\bm{f})$ respectively. In practice, $R_f$ is typically between 1 and 3, and $R_b$ is typically between 5 and 10 \citep{hascoet2014adjoints}, but these are highly program dependent. 

Note that in ML the original function corresponds to the execution of the ML code without any derivative computation or training, i.e., just evaluating a given model with input data.\footnote{Sometimes called ``inference'' by practitioners.} We will call this ``base runtime'' in this paper.

\section{Method}
\label{sec:method}

\subsection{Forward Gradients}
\label{sec:forwardgrad}
\begin{definition}
Given a function $f: \Re^n \to \Re$, we define the ``forward gradient'' $\bm{g}: \Re^n \to \Re^n$ as
\begin{align}
\bm{g}(\bm{\theta}) &= \left(\nabla f(\bm{\theta}) \cdot \bm{v} \right)\bm{v}\;,\label{eq:forwardgradient}
\end{align}
where $\bm{\theta}\in\Re^n$ is the point at which we are evaluating the gradient, $\bm{v}\in\Re^n$ is a perturbation vector taken as a multivariate random variable $\bm{v} \sim p(\bm{v})$\, such that $\bm{v}$'s scalar components $v_i$ are independent and have zero mean and unit variance for all $i$, and $\nabla f(\bm{\theta}) \cdot \bm{v} \in \Re$ is the directional derivative of $f$ at point $\bm{\theta}$ in direction $\bm{v}$.
\end{definition}

We first talk briefly about the intuition that led to this definition, before showing that $\bm{g}(\bm{\theta})$ is an unbiased estimator of the gradient $\nabla f(\bm{\theta})$ in Section~\ref{sec:proof}. 

As explained in Section~\ref{sec:background}, forward mode gives us the directional derivative $\nabla f(\bm{\theta}) \cdot \bm{v} = \sum_i \frac{\partial f}{\partial \theta_i} v_i$ directly, without having to compute $\nabla f$. Computing $\nabla f$ using only forward mode is possible by evaluating $f$ forward $n$ times with direction vectors taken as standard basis (or one-hot) vectors $\bm{e}_i \in \Re^n, i = 1 \dots n$, where $\bm{e}_i$ denotes a vector with a 1 in the $i$th coordinate and 0s elsewhere. This allows the evaluation of the sensitivity of $f$ w.r.t. each input $\frac{\partial f}{\partial \theta_i}$ separately, which when combined give us the gradient $\nabla f$.

In order to have any chance of runtime advantage over backpropagation, we need to work with a single run of the forward mode per optimization iteration, not $n$ runs.\footnote{This requirement can be relaxed depending on the problem setting and we would expect the gradient estimation to get better with more forward runs per optimization iteration.} In a single forward run, we can interpret the direction $\bm{v}$ as a weight vector in a weighted sum of sensitivities w.r.t. each input, that is $\sum_i \frac{\partial f}{\partial \theta_i}v_i$, albeit without the possibility of distinguishing the contribution of each $\theta_i$ in the final total. We therefore use the weight vector $\bm{v}$ to attribute the overall sensitivity back to each individual parameter $\theta_i$, proportional to the weight $v_i$ of each parameter $\theta_i$ (e.g., a parameter with a small weight had a small contribution and a large one had a large contribution in the total sensitivity).

In summary, each time the forward gradient is evaluated, we simply do the following:

\begin{itemize}[leftmargin=3ex]
\item Sample a random perturbation vector $\bm{v} \sim p(\bm{v})$, which has the same size with $f$'s argument $\bm{\theta}$.
\item Run $f$ via forward-mode AD, which evaluates $f(\bm{\theta})$ \emph{and} $\nabla f(\bm{\theta}) \cdot \bm{v}$ \emph{simultaneously} in the same single forward run, \textbf{without having to compute} $\bm{\nabla f}$ at all in the process. The directional derivative obtained, $\nabla f(\bm{\theta}) \cdot \bm{v}$, is a scalar, and is computed exactly by AD (not an approximation).
\item Multiply the scalar directional derivative $\nabla f(\bm{\theta}) \cdot \bm{v}$ with vector $\bm{v}$ and obtain $\bm{g}(\bm{\theta})$, the forward gradient.
\end{itemize}

Figure~\ref{fig:method} illustrates the process showing several evaluations of the forward gradient for the Beale function. We see how perturbations $v_k$ (orange) transform into forward gradients $(\nabla f \cdot v_k)v_k$ (blue) for $k\in [1, 5]$, sometimes reversing the sense to point towards the true gradient (red) while being constrained in orientation. The green arrow shows a Monte Carlo gradient estimate via averaged forward gradients, i.e., $\frac{1}{K} \sum_{k=1}^K (\nabla f \cdot v_k) v_k \approx \mathbb{E}[(\nabla f \cdot v) v]$.

\begin{figure}
\vskip 0.1in
\begin{center}
\centerline{\includegraphics[trim=0 0ex 0 0ex,clip,width=\columnwidth]{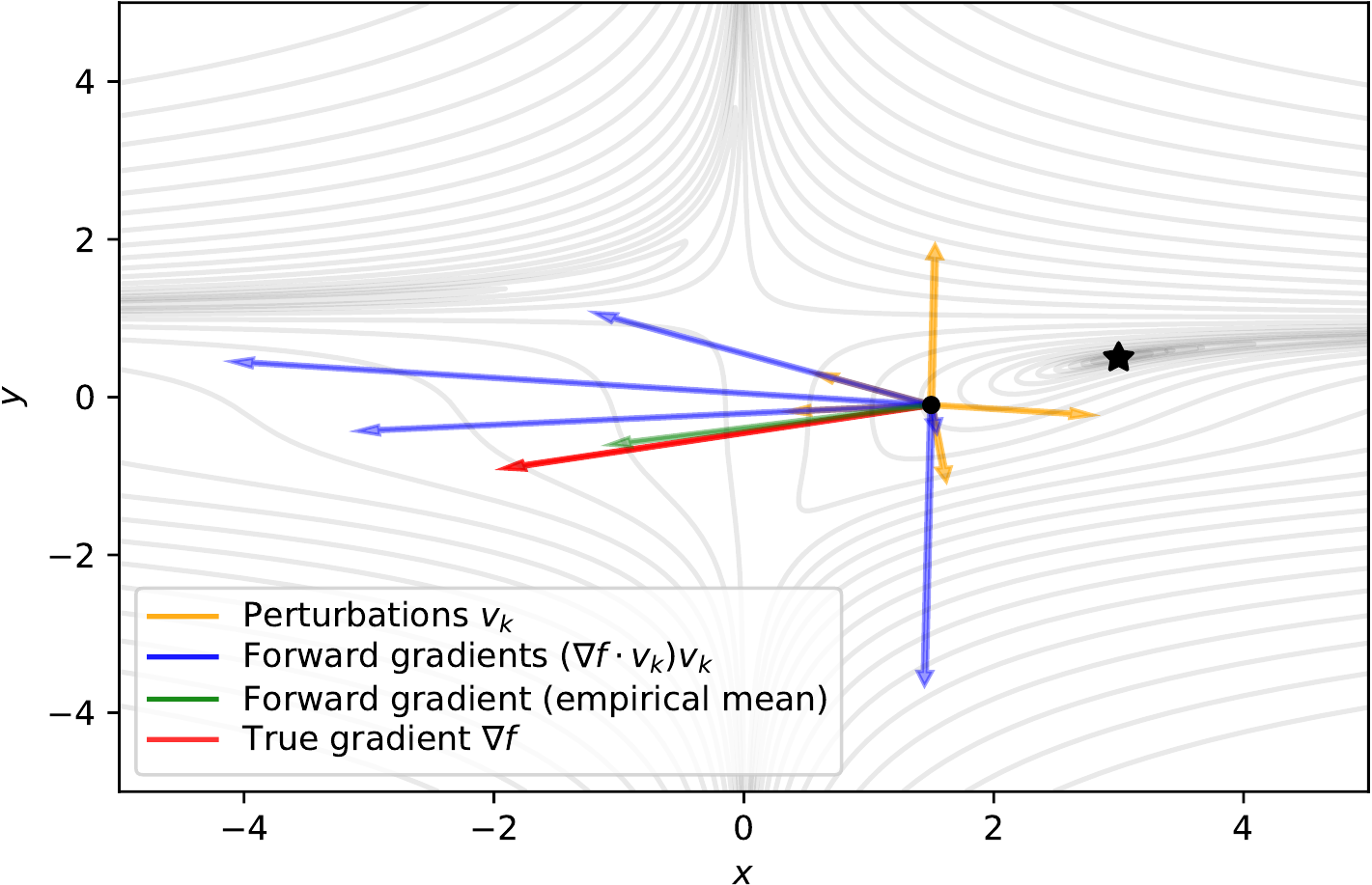}}
\caption{Five samples of forward gradient, the empirical mean of these five samples, and the true gradient for the Beale function (Section~\ref{sec:test_funcs}) at $x=1.5, y=-0.1$. Star marks the global minimum.}
\label{fig:method}
\end{center}
\vskip -0.2in
\end{figure}

\subsection{Proof of Unbiasedness}
\label{sec:proof}
\begin{theorem}
The forward gradient $\bm{g}(\bm{\theta})$ is an unbiased estimator of the gradient $\nabla f(\bm{\theta})$\;.
\end{theorem}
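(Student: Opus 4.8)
The plan is to compute the expectation $\mathbb{E}_{\bm{v}\sim p(\bm{v})}\!\left[\bm{g}(\bm{\theta})\right]$ directly and verify that it equals $\nabla f(\bm{\theta})$ componentwise. Note that $\bm{\theta}$ is fixed, so $\nabla f(\bm{\theta})$ is a constant (deterministic) vector and the only randomness enters through $\bm{v}$.

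First I would expand the directional derivative as a sum over coordinates, $\nabla f(\bm{\theta})\cdot\bm{v} = \sum_{j=1}^n \frac{\partial f}{\partial \theta_j}(\bm{\theta})\, v_j$, so that the $i$-th component of the forward gradient reads $g_i(\bm{\theta}) = \bigl(\sum_{j=1}^n \frac{\partial f}{\partial \theta_j} v_j\bigr) v_i = \sum_{j=1}^n \frac{\partial f}{\partial \theta_j}\, v_i v_j$. Taking expectations and using linearity together with the fact that the partial derivatives are deterministic gives $\mathbb{E}[g_i(\bm{\theta})] = \sum_{j=1}^n \frac{\partial f}{\partial \theta_j}\,\mathbb{E}[v_i v_j]$.

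The key step is evaluating the second moments $\mathbb{E}[v_i v_j]$. By hypothesis the components $v_i$ are independent with zero mean and unit variance, so for $i \neq j$ we have $\mathbb{E}[v_i v_j] = \mathbb{E}[v_i]\,\mathbb{E}[v_j] = 0$, and for $i = j$ we have $\mathbb{E}[v_i^2] = \Var(v_i) + \mathbb{E}[v_i]^2 = 1$; that is, $\mathbb{E}[v_i v_j] = \delta_{ij}$. Substituting collapses the sum to the single term $\mathbb{E}[g_i(\bm{\theta})] = \frac{\partial f}{\partial \theta_i}(\bm{\theta})$, and stacking over $i = 1,\dots,n$ yields $\mathbb{E}[\bm{g}(\bm{\theta})] = \nabla f(\bm{\theta})$, which is the claim.

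There is no real obstacle here: the argument is essentially a one-line second-moment computation. The only points that need care are (i) pulling $\nabla f(\bm{\theta})$ out of the expectation, which is legitimate because it is a fixed quantity rather than a random one, and (ii) observing that the result uses only the first and second moments of $\bm{v}$, hence holds for any admissible distribution $p(\bm{v})$ (e.g.\ a multivariate standard normal, or a vector of independent Rademacher variables) regardless of its higher moments. I would also remark that forward-mode AD returns $\nabla f(\bm{\theta})\cdot\bm{v}$ exactly, so the estimator introduces no approximation beyond the randomness of $\bm{v}$ itself.
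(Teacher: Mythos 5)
Your proposal is correct and follows essentially the same route as the paper's proof: expand $g_i(\bm{\theta})=\sum_j \frac{\partial f}{\partial \theta_j} v_i v_j$, pull the deterministic partial derivatives out of the expectation, and use $\mathbb{E}[v_i v_j]=\delta_{ij}$ (the paper states the two cases $\mathbb{E}[v_i^2]=1$ and $\mathbb{E}[v_i v_j]=0$ for $i\neq j$ as separate small lemmas, which is exactly your second-moment computation). Your closing remarks about distribution-independence beyond second moments and the exactness of the forward-mode directional derivative also match observations the paper makes elsewhere in the text.
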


\begin{proof}
We start with the directional derivative of $f$ evaluated at $\bm{\theta}$ in direction $\bm{v}$ written out as follows
\begin{align}
d(\bm{\theta},\bm{v}) &= \nabla f(\bm{\theta}) \cdot \bm{v}
= \sum_{i} \frac{\partial f}{\partial \theta_i} v_i \nonumber\\
&= \frac{\partial f}{\partial \theta_1} v_1 + \frac{\partial f}{\partial \theta_2} v_2 + \dots + \frac{\partial f}{\partial \theta_n} v_n \;.
\end{align}

We then expand the forward gradient $\bm{g}$ in Eq.~(\ref{eq:forwardgradient}) as
\begin{align}
\bm{g}(\bm{\theta}) &= d(\bm{\theta}, \bm{v}) \bm{v} \nonumber\\
&= \left[\begin{array}{l@{\;}c@{\;}l@{\;}c@{\;\;}c@{\;\;}c@{\;}l}
    \frac{\partial f}{\partial \theta_1} v_1^2 &+ &\frac{\partial f}{\partial \theta_2} v_1 v_2 &+ &\cdots &+ &\frac{\partial f}{\partial \theta_n} v_1 v_n \\[1ex]
    \frac{\partial f}{\partial \theta_1} v_1 v_2 &+ &\frac{\partial f}{\partial \theta_2} v_2^2 &+ &\cdots &+ &\frac{\partial f}{\partial \theta_n} v_2 v_n\\
    &&&\vdots \\[0.8ex]
    \frac{\partial f}{\partial \theta_1} v_1 v_n &+ &\frac{\partial f}{\partial \theta_2} v_2 v_n &+ &\cdots &+ &\frac{\partial f}{\partial \theta_n} v_n^2
\end{array}\right]\nonumber
\end{align}
and note that the components of $\bm{g}$ have the following form
\begin{align}
g_i(\bm{\theta}) = \frac{\partial f}{\partial \theta_i} v_i^2 + \sum_{j \neq i} \frac{\partial f}{\partial \theta_j} v_i v_j \;.
\end{align}

The expected value of each component $g_i$ is
\begin{align}
\mathbb{E}\left[g_i(\bm{\theta})\right] &= \mathbb{E}\left[\frac{\partial f}{\partial \theta_i} v_i^2 + \sum_{j\neq i} \frac{\partial f}{\partial \theta_j} v_i v_j \right]\nonumber\\
&= \mathbb{E}\left[\frac{\partial f}{\partial \theta_i} v_i^2 \right] + \mathbb{E}\left[\sum_{j\neq i} \frac{\partial f}{\partial \theta_j} v_i v_j \right]\nonumber\\
&= \mathbb{E}\left[\frac{\partial f}{\partial \theta_i} v_i^2 \right] + \sum_{j\neq i} \mathbb{E}\left[\frac{\partial f}{\partial \theta_j} v_i v_j \right]\nonumber\\
&= \frac{\partial f}{\partial \theta_i}\mathbb{E}\left[v_i^2\right] + \sum_{j\neq i} \frac{\partial f}{\partial \theta_j}\mathbb{E}\left[v_i v_j\right] \label{eq:expected_g_i}
\end{align}

The first expected value in Eq.~(\ref{eq:expected_g_i}) is of a squared random variable and all expectations in the summation term are of two independent and identically distributed random variables multiplied.

\begin{lemma}
\label{lemma:squared}
The expected value of a random variable $v$ squared\; $\mathbb{E}[v^2]=1$ when\; $\mathbb{E}[v]=0$ and $\Var[v]=1$.
\end{lemma}
\begin{proof}
Variance is $\Var[v] = \mathbb{E}\left[(v - \mathbb{E}[v])^2\right] = \mathbb{E}\left[v^2\right] - \mathbb{E}[v]^2$. Rearranging and substituting $\mathbb{E}[v]=0$ and $\Var[v]=1$, we get $\mathbb{E}\left[v^2\right] = \mathbb{E}[v]^2 + \Var[v] = 0 + 1 = 1\;.$
% \end{align}
\end{proof}

\begin{lemma}
\label{lemma:multiplied}
The expected value of two i.i.d. random variables multiplied $\mathbb{E}[v_i v_j]=0$ when $\mathbb{E}[v_i]=0$ or $\mathbb{E}[v_j]=0$.
\end{lemma}
\begin{proof}
For i.i.d. $v_i$ and $v_j$ the expected value $\mathbb{E}[v_i v_j] = \mathbb{E}[v_i] \mathbb{E}[v_j] = 0 $ when $\mathbb{E}[v_i]=0$ or $\mathbb{E}[v_j]=0$.
\end{proof}

Using Lemmas \ref{lemma:squared} and \ref{lemma:multiplied}, Eq.~(\ref{eq:expected_g_i}) reduces to
\begin{align}
\mathbb{E}\left[g_i (\bm{\theta})\right] &= \frac{\partial f}{\partial \theta_i}
\end{align}
and therefore
\begin{align}
\mathbb{E}\left[\bm{g}(\bm{\theta})\right] &= \nabla f(\bm{\theta})\;.
\end{align}
\end{proof}

\subsection{Forward Gradient Descent}

We construct a forward gradient descent (FGD) algorithm by replacing the gradient $\nabla f$ in standard GD with the forward gradient $\bm{g}$ (Algorithm~\ref{alg:fgd}). In practice we use a mini-batch stochastic version of this where $f_t$ changes per iteration as it depends on each mini-batch of data used during training. We note that the directional derivative $d_t$ in Algorithm~\ref{alg:fgd} can have positive or negative sign. When the sign is negative, the forward gradient $\bm{g}_t$  corresponds to backtracking from the direction of $\bm{v}_t$, or reversing the direction to point towards the true gradient in expectation. Figure~\ref{fig:method} shows two $v_k$ samples exemplifying this behavior.

In this paper we limit our scope to FGD to clearly study this fundamental algorithm and compare it to standard backpropagation, without confounding factors such as momentum or adaptive learning rate schemes. We believe that extensions of the method to other families of gradient-based optimization algorithms are possible.

\begin{algorithm}
% \scriptsize
 
\small
   \caption{Forward gradient descent (FGD)}
   \label{alg:fgd}
\begin{algorithmic}
   \REQUIRE $\eta$: learning rate
   \REQUIRE $f$: objective function
   \REQUIRE $\bm{\theta}_0$: initial parameter vector
   \STATE {$t \gets 0$\COMMENT{Initialize}}
   \WHILE{$\bm{\theta}_t$ not converged}
       \STATE $t \gets t + 1$
       \STATE $\bm{v}_t \sim \mathcal{N}(\bm{0}, \bm{I})$\COMMENT{Sample perturbation}

       \vspace{2mm}       
       \STATE \emph{\textbf{Note:} the following computes $f_t$ and $d_t$ simultaneously and without having to compute $\nabla f$ in the process}
       \STATE $f_t,\; d_t \gets f(\bm{\theta}_t),\; \nabla f(\bm{\theta}_t) \cdot \bm{v}$ \COMMENT{Forward AD (Section~\ref{sec:forwardgrad})}
       \vspace{2mm}

       \STATE $\bm{g}_t \gets \bm{v}_t d_t$\COMMENT{Forward gradient}
       \STATE {$\bm{\theta}_{t+1} \gets \bm{\theta}_t - \eta\,\bm{g}_t$\COMMENT{Parameter update}}
   \ENDWHILE
   \STATE {\bfseries return} $\bm{\theta}_t$
\end{algorithmic}
\end{algorithm}

\subsection{Choice of Direction Distribution}

As shown by the  proof in Section~\ref{sec:proof}, the multivariate distribution $p(\bm{v})$ from which direction vectors $\bm{v}$ are sampled must have two properties: (1) the components must be independent from each other (e.g., a diagonal Gaussian) and (2) the components must have zero mean and unit variance.

In our experiments we use the multivariate standard normal as the direction distribution $p(\bm{v})$ so that $\bm{v} \sim \mathcal{N}(\bm{0}, \bm{I})$, that is, $v_i \sim \mathcal{N}(0, 1)$ are independent for all $i$. We leave exploring other admissible distributions for future work.

\section{Related Work}

The idea of performing optimization by the use of random perturbations, thus avoiding adjoint computations, is the intuition behind a variety of approaches, including simulated annealing \citep{kirkpatrick1983optimization}, stochastic approximation \citep{spall1992multivariate}, stochastic convex optimization \citep{nesterov2017random,dvurechensky2021accelerated}, and correlation-based learning methods \citep{6313077}, which lend themselves to efficient hardware implementation \citep{NIPS1987_f033ab37}.
Our work here falls in the general class of so-called weight perturbation methods; see \citet[][\S{4.4}]{PEARLMUTTER94A} for an overview along with a description of a method for efficiently gathering second-order information during the perturbative process, which suggests that accelerated second-order variants of the present method may be feasible. Note that our method is novel in avoiding the truncation error of previous weight perturbation approaches by using AD rather than small but finite perturbations, thus completely avoiding the method of divided differences and its associated numeric issues.

In neural network literature, alternatives to backpropagation proposed include target propagation \cite{lecun1986learning,lecun1987phd,bengio2014auto,bengio2020deriving,meulemans2020theoretical}, a technique that propagates target values rather than gradients backwards between layers. For recurrent neural networks (RNNs), various approaches to the online credit assignment problem have features in common with forward mode AD \citep{pearlmutter1995gradient}. An early example is the real-time recurrent learning (RTRL) algorithm \citep{williams1989learning} which accumulates local sensitivities in an RNN during forward execution, in a manner similar to forward AD. A very recent example in the RTRL area is an anonymous submission we identified at the time of drafting this manuscript, where the authors are using directional derivatives to improve several gradient estimators, e.g., synthetic gradients \citep{jaderberg2017decoupled}, first-order meta-learning \citep{nichol2018first}, as applied to RNNs \citep{anonymous2022learning}.

Coordinate descent (CD) algorithms \citep{wright2015coordinate} have a structure where in each optimization iteration only a single component $\frac{\partial f}{\partial \theta_i}$ of the gradient $\nabla f$ is used compute an update. \citet{nesterov2012efficiency} provides an extension of CD called random coordinate descent (RCD), based on coordinate directional derivatives, where the directions are constrained to randomly chosen coordinate axes in the function's domain as opposed to arbitrary directions we use in our method. A recent use of RCD is by \citet{ding2021langevin} in Langevin Monte Carlo sampling, where the authors report no computational gain as the RCD needs to be run multiple times per iteration in order to achieve a preset error tolerance. The SEGA (SkEtched GrAdient) method by \citet{hanzely2018sega} is based on gradient estimation via random linear transformations of the gradient that is called a ``sketch'' computed using finite differences. Jacobian sketching by \citet{gower2018stochastic} is designed to provide good estimates of the Jacobian, in a manner similar to how quasi-Newton methods update Hessian estimates \citep{NEURIPS2018_d554f7bb}.

Lastly there are other, and more distantly related, approaches concerning gradient estimation such as synthetic gradients \citep{jaderberg2017decoupled}, motivated by a need to break the sequential forward--backward structure of backpropagation, and Monte Carlo gradient estimation \citep{mohamed2020monte}, where the gradient of an expectation of a function is computed with respect to the parameters defining the distribution that is integrated.

For a review of the origins of reverse mode AD and backpropagation, we refer the interested readers to \citet{schmidhuber2020who} and \citet{griewank2012who}.

\section{Experiments}
\vspace{-0.75mm}

\begin{figure*}[h]
% \vskip 0.2in
\begin{center}
\centerline{\includegraphics[trim=0 0ex 0 4ex,clip,width=\textwidth]{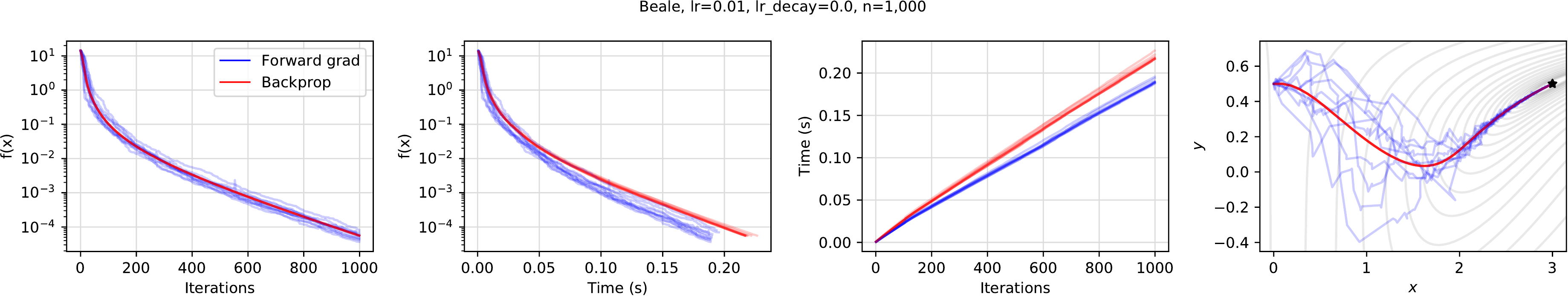}}
\centerline{\includegraphics[trim=0 0 0 4ex,clip,width=\textwidth]{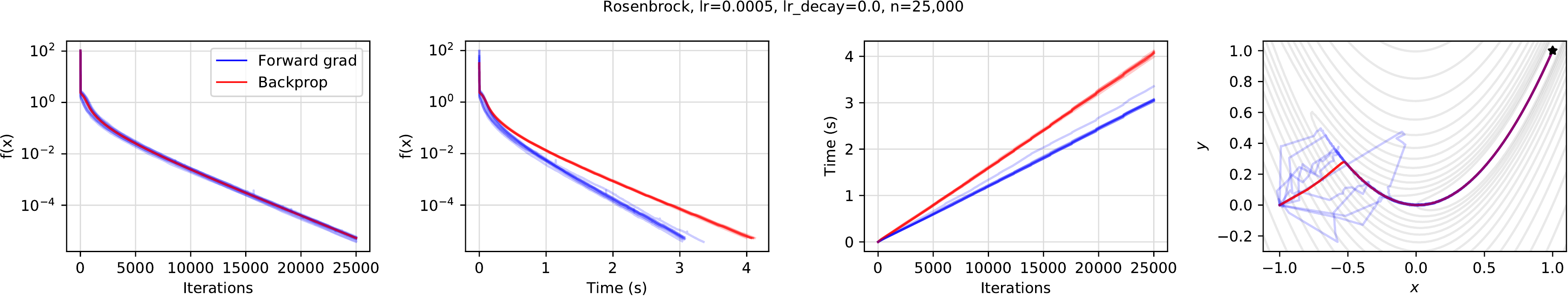}}
\vskip -0.1in
\vspace{-1mm}
\caption{Comparison of forward gradient and backpropagation in test functions, showing ten independent runs. Top row: Beale function, learning rate 0.01. Bottom row: Rosenbrock function. Learning rate $5\times 10^{-4}$. Rightmost column: Optimization trajectories in each function's domain, shown over contour plots of the functions. Star symbol marks the global minimum in the contour plots.}
\label{fig:test_functions}
\end{center} 
\vskip -0.1in
\end{figure*}

\begin{figure*}
% \vskip 0.2in
\begin{center}
\centerline{\includegraphics[trim=0 0ex 0 4ex,clip,width=\textwidth]{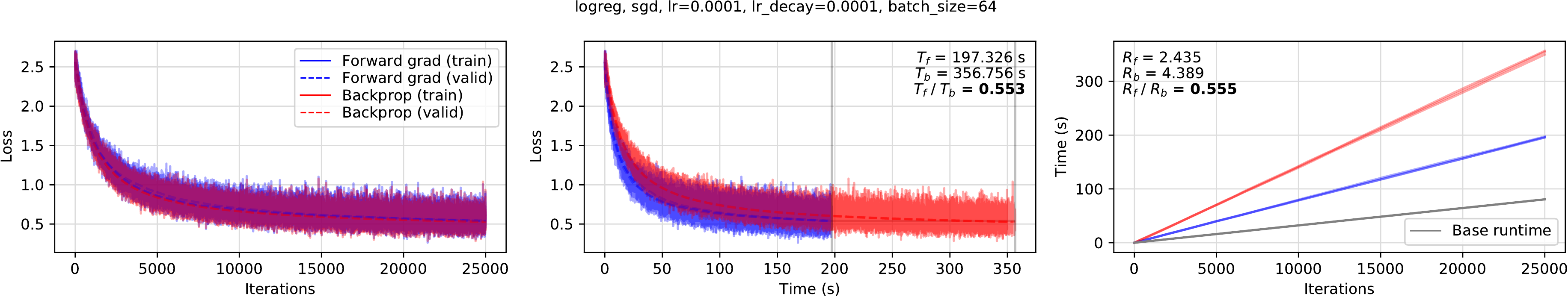}}
% \vskip -0.1in
\vspace{-1mm}
\caption{Comparison of forward gradient and backpropagation in logistic regression, showing five independent runs. Learning rate $10^{-4}$.}
\label{fig:logreg}
\end{center}
% \vskip -0.1in
\end{figure*}

We implement forward AD in PyTorch to perform the experiments (details given in Section~\ref{sec:implementation}). In all experiments, except in Section~\ref{sec:test_funcs}, we use learning rate decay with $\eta_{i} = \eta_0\, e^{-i\,k}$, where $\eta_i$ is the learning rate at iteration $i$, $\eta_0$ is the initial learning rate, and $k = 10^{-4}$. In all experiments we run forward gradients and backpropagation for an \emph{equal number of iterations.} We run the code with CUDA on a Nvidia Titan XP GPU and use a minibatch size of 64.

First we look at test functions for optimization, and compare the behavior of forward gradient and backpropagation in the $\Re^2$ space where we can plot and follow optimization trajectories. We then share results of experiments with training ML architectures of increasing complexity. We measured no practical difference in memory usage between the two methods (less than 0.1\% difference in each experiment).

% Memory usage, CUDA mem occupied by tensors
% torch.cuda.memory_allocated
% logreg fwd: 500,224 bytes 
% logreg rev: 499,712 bytes
% mlp fwd: 22,769,152 bytes
% mlp rev: 22,768,640 bytes
% cnn fwd: 14,464,000 bytes
% cnn rev: 14,463,488 bytes

\subsection{Optimization Trajectories of Test Functions}
\label{sec:test_funcs}

In Figure~\ref{fig:test_functions} we show the results of experiments with

\begin{itemize}[leftmargin=3ex]
\item the Beale function, $f(x,y) = (1.5-x+xy)^2 + (2.25 - x + xy^2)^2  + (2.625 -x + xy^3)^2$
\item and the Rosenbrock function, $f(x,y) = (a - x)^2 + b(y-x^2)^2$, where $a=1, b=100$.
\end{itemize}

Note that forward gradient and backpropagation have roughly similar time complexity in these cases, forward gradient being slightly faster per iteration. Crucially, we see that forward gradient steps behave the same way as backpropagation in expectation, as seen in loss per iteration (leftmost) and optimization trajectory (rightmost) plots.

\begin{figure*}[h]
% \vskip 0.2in
\begin{center}
\includegraphics[trim=0 0ex 0 4ex,clip,width=\textwidth]{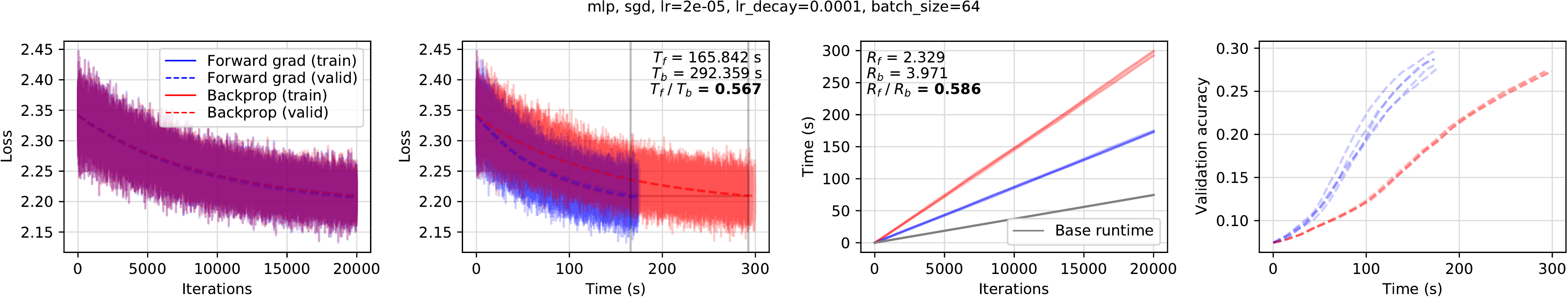}
\includegraphics[trim=0 0ex 0 4ex,clip,width=\textwidth]{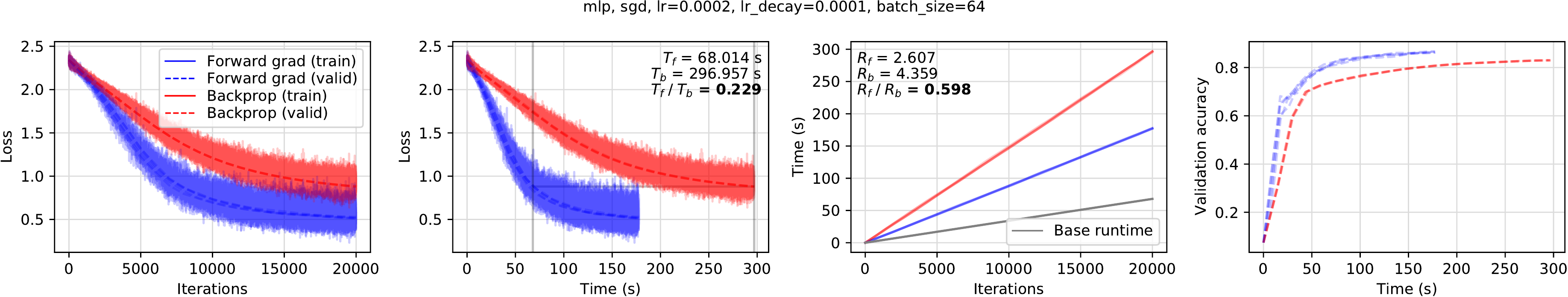}
\vspace{-4mm}
\caption{Comparison of forward gradient and backpropagation for the multi-layer NN, showing two learning rates. Top row: learning rate $2\times 10^{-5}$. Bottom row: learning rate $2\times 10^{-4}$. Showing five independent runs per experiment.}
\label{fig:mlp}
\end{center}
% \vskip -0.1in
\end{figure*}

\begin{figure*}
% \vskip 0.2in
\begin{center}
\includegraphics[trim=0 0ex 0 4ex,clip,width=\textwidth]{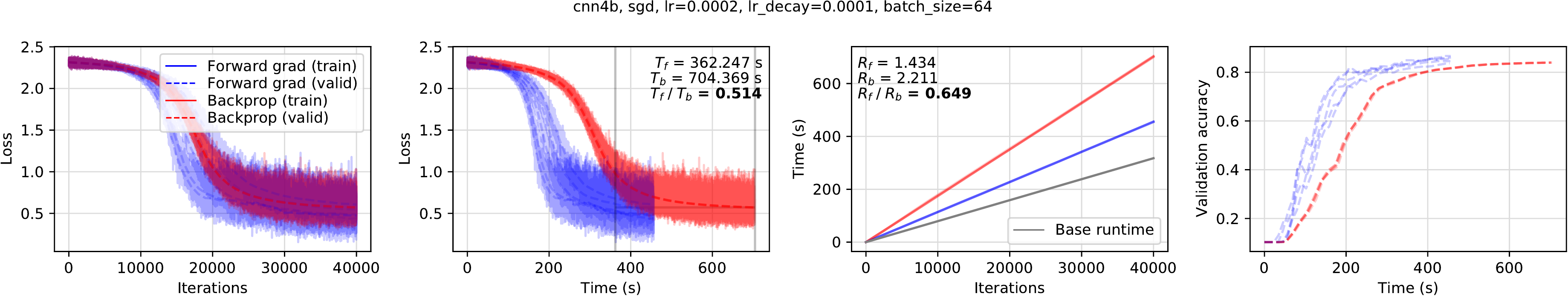}
\vspace{-5mm}
\caption{Comparison of forward gradient and backpropagation for the CNN. Learning rate $2\times 10^{-4}$. Showing five independent runs.}
\label{fig:cnn}
\end{center}
\vskip -0.1in
\end{figure*}

\subsection{Empirical Measures of Complexity}

In order to compare the two algorithms applied to ML problems in the rest of this section, we use several measures. 

For runtime comparison we use the $R_f$ and $R_b$ factors defined in Section~\ref{sec:runtime_cost}. In order to compute these factors, we measure $\textrm{runtime}(f)$ as the time it takes to run a given architecture with a sample minibatch of data and compute the loss, without performing any derivative computation and parameter update. Note that in the measurements of $R_f$ and $R_b$, the time taken by gradient descent (parameter updates) are included, in addition to the time spent in computing the derivatives. We also introduce the ratio $R_f / R_b$ as a measure of the runtime cost of the forward gradient relative to the cost of backpropagation in a given architecture.

In order to compare loss performance, we define $T_b$ as the time at which the lowest validation loss is achieved by backpropagation (averaged over runs). $T_f$ is the time the same validation loss is achieved by the forward gradient for the same architecture. The $T_f / T_b$ ratio gives us a measure of the time it takes for the forward mode to achieve the minimum validation loss relative to the time taken by backpropagation.

\subsection{Logistic Regression}

Figure~\ref{fig:logreg} gives the results of several runs of multinomial logistic regression for MNIST digit classification. We observe that the runtime cost of the forward gradient and backpropagation relative to the base runtime are $R_f = 2.435$ and $R_b = 4.389$, which are compatible with what one would expect from a typical AD system (Section~\ref{sec:runtime_cost}). The ratios $R_f / R_b = 0.555$ and $T_f / T_b = 0.553$ indicate that the forward gradient is roughly twice as fast as backpropagation in both runtime and loss performance. In this simple problem these ratios coincide as both techniques have nearly identical behavior in the loss per iteration space, meaning that the runtime benefit is reflected almost directly in the loss per time space. In more complex models in the following subsections we will see that the relative loss and runtime ratios can be different in practice.

\subsection{Multi-Layer Neural Networks}
\label{sec:mlp}

Figure~\ref{fig:mlp} shows two experiments with a multi-layer neural network (NN) for MNIST classification with different learning rates. The architecture we use has three fully-connected layers of size 1024, 1024, 10, with ReLU activation after the first two layers. In this model architecture, we observe the runtime costs of the forward gradient and backpropagation relative to the base runtime as $R_f = 2.468$ and $R_b = 4.165$, and the relative measure $R_f / R_b = 0.592$ on average. These are roughly the same with the logistic regression case.

The top row (learning rate $2\times 10^{-5}$) shows a result where forward gradient and backpropagation behave nearly identical in loss per iteration (leftmost plot), resulting in a $T_f / T_b$ ratio close to $R_f / R_b$. We show this result to communicate an example where the behavior is similar to the one we observed for logistic regression, where the loss per iteration behavior between the techniques are roughly the same and the runtime benefit is the main contributing factor in the loss per time behavior (second plot from the left).

Interestingly, in the second experiment (learning rate $2\times 10^{-4}$) we see that forward gradient achieves faster descent in the loss per iteration plot. We believe that this behavior is due to the different nature of stochasticity between the regular SGD (backpropagation) and the forward SGD algorithms, and we speculate that the noise introduced by forward gradients might be beneficial in exploring the loss surface. When we look at the loss per time plot, which also incorporates the favorable runtime of the forward mode, we see a loss performance metric $T_f / T_b$ value of 0.211, representing a case that is more than four times as fast as backpropagation in achieving the reference validation loss.

\subsection{Convolutional Neural Networks}

In Figure~\ref{fig:cnn} we show a comparison between the forward gradient and backpropagation for a convolutional neural network (CNN) for the same MNIST classification task. The CNN has four convolutional layers with $3\times3$ kernels and 64 channels, followed by two linear layers of sizes 1024 and 10. All convolutions and the first linear layer are followed by ReLU activation and there are two max-pooling layers with $2\times 2$ kernel after the second and fourth convolutions.

In this architecture we observe the best forward AD performance with respect to the base runtime, where the forward mode has $R_f=1.434$ representing an overhead of only 43\% on top of the base runtime. Backpropagation with $R_b=2.211$ is very close to the ideal case one can expect from a reverse AD system, taking roughly double the time. $R_f/R_b = 0.649$ represents a significant benefit for the forward AD runtime with respect to backpropagation. In loss space, we get a ratio $T_f/T_b = 0.514$ which shows that forward gradients are close to twice as fast as backpropagation in achieving the reference level of validation loss.

\subsection{Scalability}

The results in the previous subsections demonstrate that

\begin{itemize}[leftmargin=3ex]
\item training without backpropagation can feasibly work within a typical ML training pipeline and do so in a computationally competitive way, and
\item forward AD can even beat backpropagation in loss decrease per training time for the same choice of hyperparameters (learning rate and learning rate decay).
\end{itemize}

In order to investigate whether these results will scale to larger NNs with more layers, we measure runtime cost and memory usage as a function of NN size. In Figure~\ref{fig:scaling} we show the results for the MLP architecture (Section~\ref{sec:mlp}), where we run experiments with an increasing number of layers in the range $[1, 100]$. The linear layers are of size 1,024, with no bias. We use a mini-batch size 64 as before.

Looking at the cost relative to the base runtime, which also changes as a function of the number of layers, we see that backpropagation remains within $R_b \in [4, 5]$ and forward gradient remains within $R_f \in [3, 4]$ for a large proportion of the experiments. We also observe that forward gradients remain favorable for the whole range of layer sizes considered, with the $R_f/R_b$ ratio staying below 0.6 up to ten layers and going slightly over 0.8 at 100 layers. Importantly, there is virtually no difference in memory consumption between the two methods.

\begin{figure}[t]
% \vskip 0.2in
\begin{center}
\includegraphics[trim=0 0ex 0 4ex,clip,width=\columnwidth]{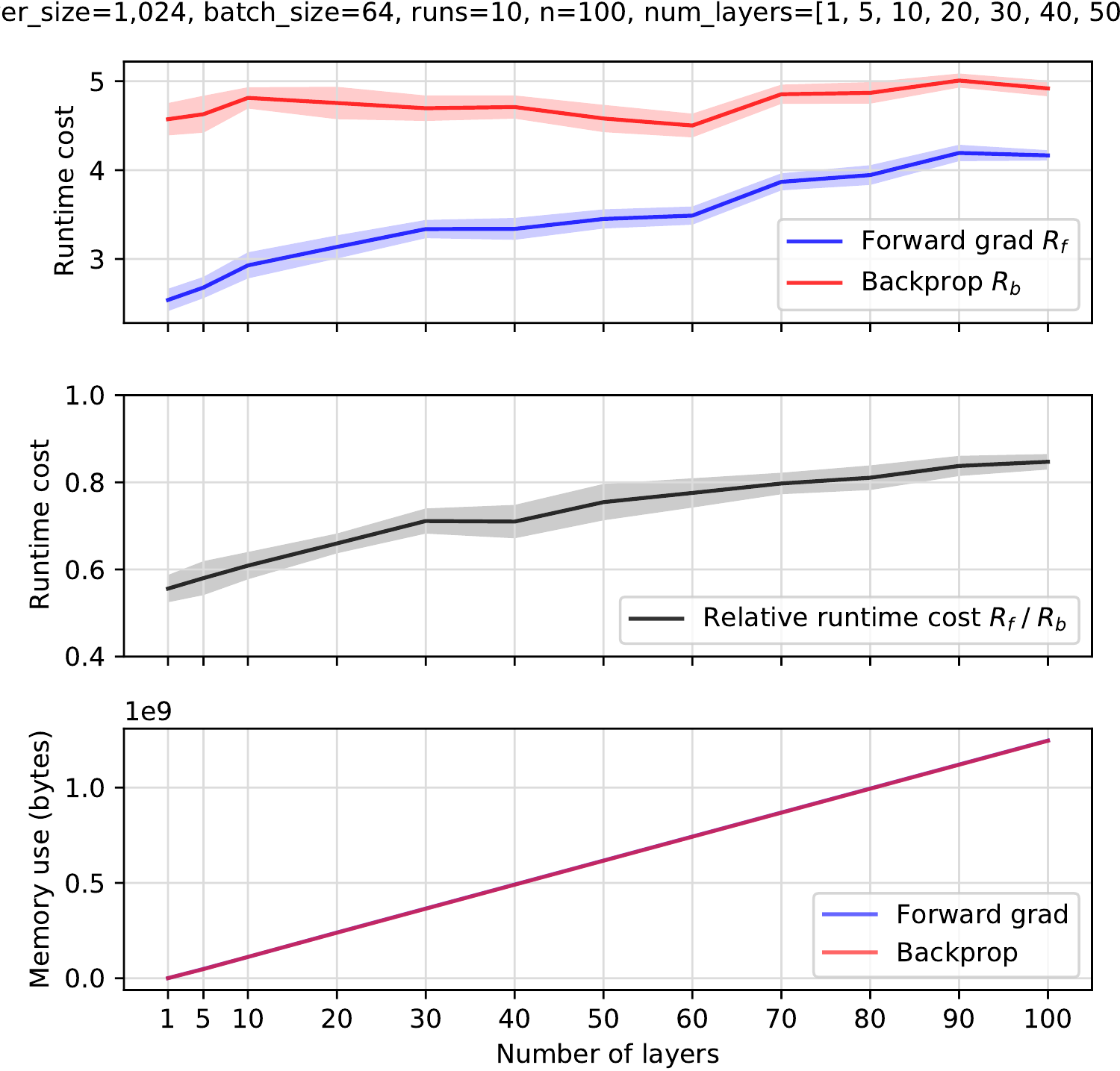}
% \vskip -0.1in
\vspace{-5mm}
\caption{Comparison of how the runtime cost and memory usage of forward gradients and backpropagation scale as a function NN depth for the MLP architecture where each layer is of size 1024. Showing mean and standard deviation over ten independent runs.}
\label{fig:scaling}
\end{center}
\vskip -0.2in
\end{figure}

\section{Implementation}
\label{sec:implementation}

We implement a forward-mode AD system in Python and base this on PyTorch tensors in order to enable a fair comparison with a typical backpropagation pipeline in PyTorch, which is widely used by the ML community.\footnote{We also experimented with the forward mode implementation in JAX \citep{jax2018github} but decided to base our implementation on PyTorch due to its maturity and simplicity allowing us to perform a clear comparison.} We release our implementation publicly.\footnote{To be shared in the upcoming revision.}

Our forward-mode AD engine is implemented from scratch using operator overloading and non-differentiable PyTorch tensors (\texttt{requires\_grad=False}) as a building block. This means that our forward AD implementation does not use PyTorch's reverse-mode implementation (called ``autograd'') and computation graph. We produce the backpropagation results in experiments using PyTorch's existing reverse-mode code (\texttt{requires\_grad=True} and \texttt{.backward()}) as usual.

Note that empirical comparisons of the relative runtimes of forward- and reverse-mode AD are highly dependent on the implementation details in a given system and would show differences across different code bases. When implementing the forward mode of tensor operations common in ML (e.g., matrix multiplication, convolutions), we identified opportunities to the make forward AD operations even more efficient (e.g., stacking channels of primal and derivative parts of tensors in a convolution). Note that the implementation we use in this paper does not currently have these. We expect the forward gradient performance to improve even further as high-quality forward-mode implementations find their way into mainstream ML libraries and get tightly integrated into tensor code.

Another implementation approach that can enable a straightforward application of forward gradients to existing code can be based on the complex-step method \citep{martins2003complex}, a technique that can approximate directional derivatives with nothing but basic support for complex numbers.

\section{Conclusions}

We have shown that a typical ML training pipeline can be constructed without backpropagation, using only forward AD, while still being computationally competitive. We expect this contribution to find use in distributed ML training, which is outside the scope of this paper. Furthermore, the runtime results we obtained with our forward AD prototype in PyTorch are encouraging and we are cautiously optimistic that they might be the first step towards significantly decreasing the time taken to train ML architectures, or alternatively, enabling the training of more complex architectures with a given compute budget. We are excited to have the results confirmed and studied further by the research community.

The work presented here is the basis for several directions that we would like to follow. In particular, we are interested in working on gradient descent algorithms other than SGD, such as SGD with momentum, and adaptive learning rate algorithms such as Adam \citep{kingma2014adam}. In this paper we deliberately excluded these to focus on the most isolated and clear case of SGD, in order to establish the technique and a baseline. We are also interested in experimenting with other ML architectures. The components used in our experiments (i.e., linear and convolutional layers, pooling, ReLU nonlinearity) are representative of the building blocks of many current architectures in practice, and we expect the results to apply to these as well.

Lastly, in the longer term we are interested in seeing whether the forward gradient algorithm can contribute to the mathematical understanding of the biological learning mechanisms in the brain, as backpropagation has been historically viewed as biologically implausible as it requires a precise backward connectivity \citep{bengio2015towards,lillicrap2016random,lillicrap2020backpropagation}. In this context, one way to look at the role of the directional derivative in forward gradients is to interpret it as the feedback of a single global scalar quantity that is identical for all the computation nodes in the network.

We believe that forward AD has computational characteristics that are ripe for exploration by the ML community, and we expect that its addition to the conventional ML infrastructure will lead to major breakthroughs and new approaches.

\bibliography{main.bib}
\bibliographystyle{icml2022}

%%%%%%%%%%%%%%%%%%%%%%%%%%%%%%%%%%%%%%%%%%%%%%%%%%%%%%%%%%%%%%%%%%%%%%%%%%%%%%%
%%%%%%%%%%%%%%%%%%%%%%%%%%%%%%%%%%%%%%%%%%%%%%%%%%%%%%%%%%%%%%%%%%%%%%%%%%%%%%%
% APPENDIX
%%%%%%%%%%%%%%%%%%%%%%%%%%%%%%%%%%%%%%%%%%%%%%%%%%%%%%%%%%%%%%%%%%%%%%%%%%%%%%%
%%%%%%%%%%%%%%%%%%%%%%%%%%%%%%%%%%%%%%%%%%%%%%%%%%%%%%%%%%%%%%%%%%%%%%%%%%%%%%%
% \newpage
% \appendix
% \onecolumn
% \section{You \emph{can} have an appendix here.}

% You can have as much text here as you want. The main body must be at most $8$ pages long.
% For the final version, one more page can be added.
% If you want, you can use an appendix like this one, even using the one-column format.
%%%%%%%%%%%%%%%%%%%%%%%%%%%%%%%%%%%%%%%%%%%%%%%%%%%%%%%%%%%%%%%%%%%%%%%%%%%%%%%
%%%%%%%%%%%%%%%%%%%%%%%%%%%%%%%%%%%%%%%%%%%%%%%%%%%%%%%%%%%%%%%%%%%%%%%%%%%%%%%

\end{document}